\newtheorem{theorem}{Theorem}[section]
\newtheorem{definition}{Definition}[section]
\newtheorem{lemma}{Lemma}[section]
\title{Solving Oversmoothing in GNNs via Nonlocal Message Passing: Algebraic Smoothing and Depth Scalability}
\author{%
   Weiqi Guan\thanks{School of Mathematical Sciences, Fudan University, Shanghai 200433, China} \\
  \texttt{wqguan24@m.fudan.edu.cn} \\
  \And
  Junlin He\thanks{Corresponding author}\\
  \texttt{junlinspeed.he@connect.polyu.hk}
}
\begin{document}

\maketitle

\begin{abstract}

    The relationship between Layer Normalization (LN) placement and the oversmoothing phenomenon remains underexplored. We identify a critical dilemma: \textbf{Pre-LN} architectures avoid oversmoothing but suffer from the \textit{curse of depth}, while \textbf{Post-LN} architectures bypass the curse of depth but experience oversmoothing.

    To resolve this, we propose a new method called non-local message passing based on Post-LN that induces algebraic smoothing, preventing oversmoothing without the curse of depth. Empirical results across five benchmarks demonstrate that our approach supports deeper networks (up to 256 layers) and improves performance, requiring no additional parameters.

    Key contributions:
    \begin{itemize}
        \item \textbf{Theoretical Characterization:} Analysis of LN dynamics and their impact on oversmoothing and the curse of depth.
        \item \textbf{A Principled Solution:} A parameter-efficient method that induces algebraic smoothing and avoids oversmoothing and the curse of depth.
        \item \textbf{Empirical Validation:} Extensive experiments showing the effectiveness of the method in deeper GNNs.
    \end{itemize}


\end{abstract}

\section{Introduction}
Graph Neural Networks (GNNs) have emerged as the de facto standard for learning from graph-structured data, ranging from motif detection\citep{besta2022motif} to social network analysis\citep{fan2019graph}. 
Despite their widespread success, GNNs face a fundamental limitation regarding their depth: \textit{over-smoothing}. 
This phenomenon manifests as the exponential convergence of node representations to a distinguishing-less subspace as the number of layers increases, leading to severe performance degradation in deep architectures.
While foundational models such as Graph Convolutional Networks (GCN) \citep{kipf2017semisupervised} and Graph Attention Networks (GAT) \citep{veličković2018graph} remain dominant baselines \citep{luo2024classicgnnsstrongbaselines}, they are theoretically susceptible to such issue \citep{NEURIPS2023_6e4cdfdd,Oono2020Graph, cai2020note}.

Significant research efforts have been dedicated to mitigating over-smoothing\citep{chen2020simple, Zhao2020PairNorm:, scholkemper2025residualconnectionsnormalizationprovably},particularly through the lens of differential equation analysis. 
Notably, GNNs can be interpreted as discretizations of continuous differential equations, with foundational works establishing connections to heat equations and diffusion processes\citep{chamberlain2021grand, pmlr-v97-wu19e}. 
While refining the discretization step size allows for deeper architectures, it fails to fundamentally prevent the exponential decay of Dirichlet energy associated with diffusive dynamics. 
Consequently, recent studies have sought to identify dynamical systems that inherently avoid such decay. 
Pioneering work by \citet{pmlr-v162-rusch22a} introduced wave equations on graphs, which exhibit oscillatory behavior rather than converging exponentially to a steady state. 
Subsequently, \citet{rusch2023gradient} incorporated a gradient gating mechanism to achieve algebraically smoothing, while \citet{kang2024unleashing} recently leveraged fractional time derivatives to attain similar effects. 
However, a critical limitation persists across these methods: the reliance on layer-shared edge weights (time-invariant parameters). 
This architectural constraint—prevalent in both explicit schemes\citep{rusch2023gradient} and implicit schemes where the step size functions as the layer index\citep{pmlr-v162-rusch22a, kang2024unleashing}—substantially limits the model's representational capacity and flexibility.


Inspired by the transformative success of Large Language Models (LLMs)~\citep{vaswani2017attention}, which can be conceptually viewed as models operating on fully-connected graphs, researchers have increasingly adapted the Transformer architecture to graph data to capture long-range dependencies\citep{rampášek2023recipegeneralpowerfulscalable, shirzad2023exphormersparsetransformersgraphs, wu2023nodeformerscalablegraphstructure}. 
Since Transformers can be formally viewed as fully connected GNNs employing self-attention\citep{joshi2025transformersgraphneuralnetworks}, it is natural to investigate whether they inherit the over-smoothing pathologies of traditional GNNs. 
A critical component governing the signal propagation and trainability of Transformers is Layer Normalization (LN). 
Extensive research in the NLP domain has revealed that the placement of LN—specifically Pre-LN versus Post-LN configurations—dramatically impacts model convergence and stability\citep{wang2019learning, xu2019understanding}. 
For instance, Pre-LN is often associated with better training stability but suffers from the \textit{curse of depth}, a phenomenon first formalized in\citep{sun2025cursedepthlargelanguage}. This term describes the observation where deeper layers in modern LLMs contribute significantly less to representation learning compared to earlier ones. In contrast, while Post-LN architectures can lead to rank collapse or convergence to a single cluster\citep{shi2022revisiting, geshkovski2025mathematicalperspectivetransformers}, they appear immune to the curse of depth, although this property has primarily been empirically observed in shallow configurations\citep{sun2025cursedepthlargelanguage}.

Despite these insights in LLMs, the direct causal relationship between Layer Normalization placement and the over-smoothing phenomenon specifically within attention-based GNNs remains largely unexplored. Surprisingly, we find a critical dilemma: \textbf{Pre-LN} configurations succumb to the \textit{curse of depth} despite avoiding signal collapse (over-smoothing), whereas \textbf{Post-LN} architectures avoid the curse of depth but suffer from rapid \textit{over-smoothing}. Figure \ref{fig: log-log plots of Laplacian energy}(c) reveals a power-law scaling of Laplacian energy in \textbf{Pre-LN} architectures, a behavior driven by the dominance of residual connections in the signal propagation. Although this mechanism averts the collapse of Laplacian energy (over-smoothing), it induces a side effect called \textit{curse of depth}: representations undergo minimal evolution across layers, as corroborated by the high inter-layer cosine similarity in Figure \ref{fig: inter-layer cosine similarity}(b). In sharp contrast, \textbf{Post-LN} architectures place greater emphasis on the residual branch, which serves to diversify features across layers. Nevertheless, they suffer from an exponential decay of high-frequency components as shown in Figure \ref{fig: log-log plots of Laplacian energy}(a), a phenomenon that leads to the over-smoothing issue. These observations align with those reported in LLMs \citep{sun2025cursedepthlargelanguage, shi2022revisiting, geshkovski2025mathematicalperspectivetransformers}.

\begin{figure}
  \centering
  \includegraphics[width=1\textwidth]{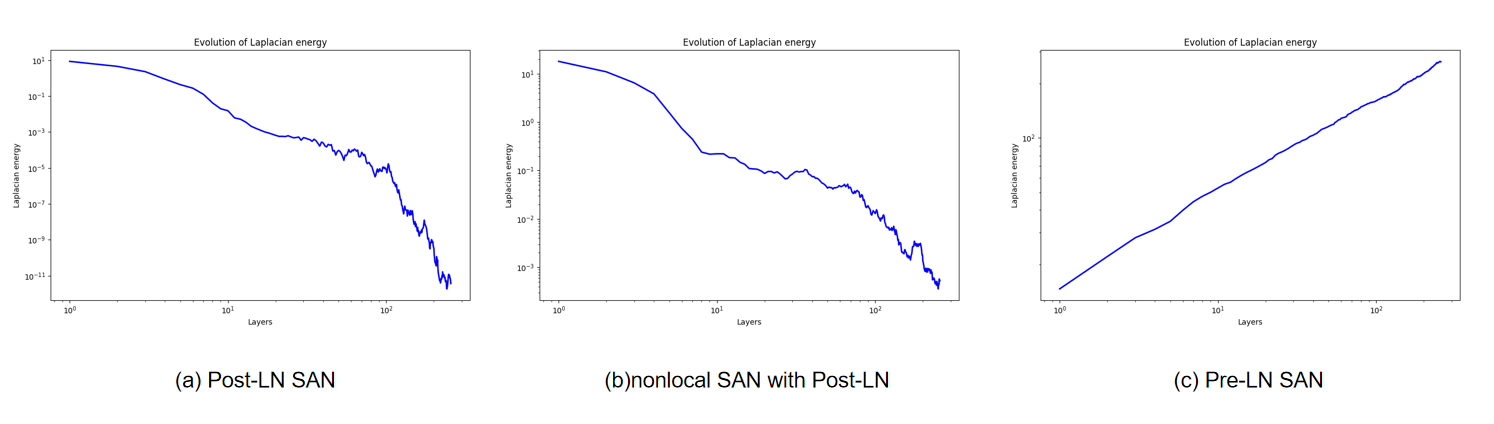}
  \label{fig: log-log plots of Laplacian energy}
  \caption{Log-log plots of Laplacian energy evolution at initialization on Cora. The panels display (a) the Post-LN SAN, (b) our proposed nonlocal SAN with Post-LN, and (c) the Pre-LN SAN architecture.}
\end{figure}

\begin{figure}
  \centering
  \includegraphics[width=0.8\textwidth]{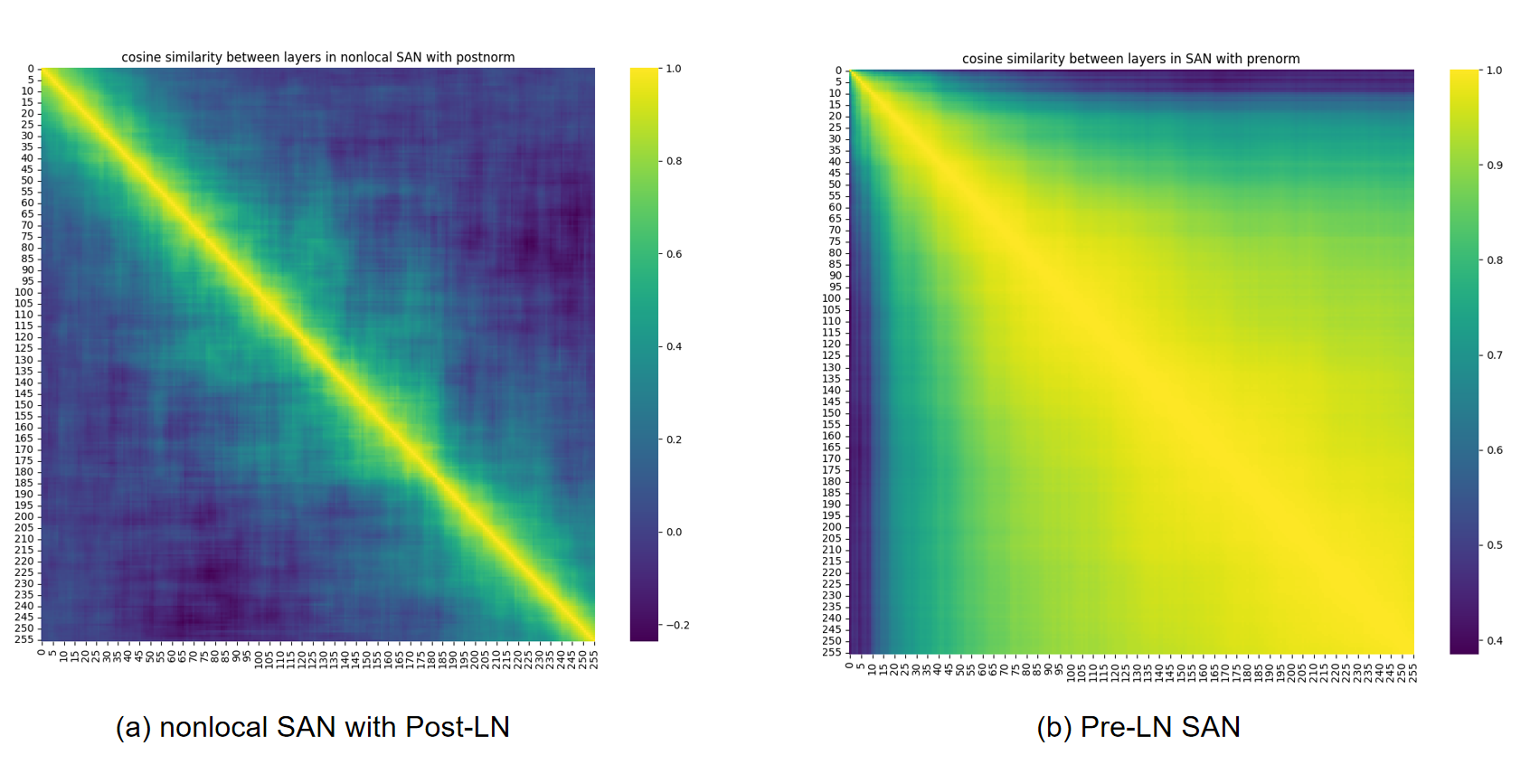}
  \caption{Inter-layer cosine similarity of representations after training on Cora. (a) Our proposed nonlocal SAN with Post-LN. (b) The Pre-LN SAN.}
  \label{fig: inter-layer cosine similarity}
\end{figure}

To rigorously analyze the layer-wise behavior of GNNs, we establish a novel theoretical framework by modeling GNNs as generalized weighted graphs and employing integration by parts tools. Within this framework, we first demonstrate that the weighted aggregation operation inherent in standard GNNs inevitably leads to over-smoothing, characterized by an exponential decay of the Laplacian energy as the network depth increases.
Furthermore, we extend our analysis to the Pre-LN configuration. While the property that the Laplacian energy exhibits a power-law growth pattern renders Pre-LN architectures robust against over-smoothing, we identify a critical limitation: the relative Laplacian energy decays to zero. This estimate of decay precipitates the curse of depth, hindering the effective training of extremely deep networks.
To resolve this dilemma, we turn our attention to the Post-LN architecture, which naturally avoids the curse of depth but traditionally suffers from over-smoothing. Building upon Post-LN, we propose a principled method with formal guarantees that enforces a power-law decay of the Laplacian energy. This approach achieves algebraic smoothing, thereby fundamentally mitigating the over-smoothing issue without introducing the optimization difficulties associated with the curse of depth.
As illustrated in Figure \ref{fig: log-log plots of Laplacian energy}(b), the Laplacian energy in our model decays slowly, maintaining a significant magnitude of $10^{-5}$ even at a depth of 256 layers. Crucially, our proposed method effectively circumvents the curse of depth, as corroborated by the low inter-layer cosine similarity shown in Figure \ref{fig: inter-layer cosine similarity}(a). Extensive experiments validate both our theoretical analysis and the superiority of the proposed method.
Our main contributions are summarized as follows:
\begin{itemize}
    \item \textbf{Theoretical Characterization of LN Dynamics:} 
    We perform a systematic investigation into the impact of Layer Normalization (LN) placement on the dichotomy between \textit{over-smoothing} and the \textit{curse of depth}. Within a theoretical framework based on weighted graphs, we provide a formal analysis using integration by parts to delineate the asymptotic behaviors of Laplacian energy.

    \item \textbf{A Principled Remedy for Over-smoothing:} 
    We propose a novel, parameter-efficient, and theoretically guaranteed method called nonlocal message passing to fundamentally overcome over-smoothing in Post-LN architectures. By inducing algebraic smoothing rather than exponential decay, our method requires no additional parameters and incurs negligible computational overhead. Importantly, our model remains immune to the curse of depth.

    \item \textbf{Empirical Verification and Scalability:}
    We conduct comprehensive experiments across 5 benchmarks to corroborate our theoretical findings. The empirical results demonstrate that our method effectively mitigates the over-smoothing issue, enabling the successful training of significantly deeper networks (up to at least 256 layers), avoiding the curse of depth, improving performance.
\end{itemize}

\section{Preliminaries}
\label{gen_inst}
In this section, we review the fundamental analysis on weighted undirected graphs $G=(V,E,\omega,\mu)$. The graph structure consists of a vertex set $V$, an edge set $E$, a symmetric weight function $\omega$ on $E$ satisfying $\omega_{ij}=\omega_{ji}>0$ for every $(i,j),(j,i)\in E$, and a vertex measure $\mu$ on $V$ with $\mu(i)>0$ for all $i\in V$, we also abbreviate $\mu(i)$ as $\mu_i$. Denote $n$ as the number of nodes. If node $j$ is a neighborhood of node $i$, we denote as $i\sim j$. The neighborhood of node $i$ is denoted as $\mathcal{N}_i:=\{j\in[n]:j\sim i\}$. The space of vector-valued functions on $G$ is denoted by $C(G)=\{X:V\to\mathbb{R}^d\}$. Readers may be more familiar with the expression for features on graphs as matrix $X\in \mathbb{R}^{n\times d}$, and can view $X(i)$ as the i-th row vectors and $u(i,j)$ as $(i,j)$ elements in matrix $X$. For two functionals $\mathcal{E}_1,\mathcal{E}_2: C(G)\rightarrow \mathbb{R}$ we denote $\mathcal{E}_1\sim \mathcal{E}_2$ if there exist universal constants $C_1,C_2$ such that $C_1 \mathcal{E}_1(X)\leq \mathcal{E}_2(X)\leq C_2\mathcal{E}_1(X)$ for all $X\in C(G)$. For any $X\in C(G)$, the weighted Laplacian is defined as
\[
\Delta_{\mu} X(i)=\sum_{j\sim i}\frac{\omega_{ij}}{\mu_i}(X(j)-X(i)).
\]
Readers can also view weighted Laplacian as $n\times n$ matrix, two views are the same thing. If $\sum\limits_{j\in\mathcal{N}_i}\omega_{ij}< \mu_{i}$, then we define the aggregation matrix as $P_{\mu}=\Delta_{\mu}+I_n$ with $I_n$ the identity matrix. For $X\in C(G)$, we define the integration of $X$ as 
\[
\int_{G}Xd\mu=\sum\limits_{i\in [n]}X(i)\mu_i.
\]
For any $X,Y\in C(G)$, we define the inner product of gradient as follows.
\[
\nabla_{\mu} X\cdot\nabla_{\mu}Y(i)=\sum\limits_{j\sim i}\frac{\omega_{ij}}{\mu_i}(X(j)-X(i))\cdot(Y(j)-Y(i)).
\]
We abbreviate $\nabla_{\mu} X\cdot\nabla_{\mu}X$ as $\vert \nabla_{\mu}X\vert^2$. For a vector $p=(p_1,...,p_d)$, the $l^2$ norm of vector $p$ is defined as $\vert p\vert=\sqrt{\sum\limits_{i=1}^{d}p_i^{2}}$. For $X\in C(G)$, We define $\Vert X\Vert$ as 
\[
\Vert X\Vert=\sqrt{\sum\limits_{i\in [n]}\vert X(i)\vert^2}
\]
The update rule for attention-based GNNs is given by
\[
X^{k+1}=\sigma(P^{k}X^kW^k).
\]
Where $W^k$ is a learnable weight matrix and $P^k$ is the aggregation matrix at the $k$-th layer. Let $P^k_{ij}$ be the element in the $i$-th row and $j$-th column of $P^k$, which represents the attention value of the edge $(x_i, x_j)$. In attention-based GNNs, $P^k_{ij}$ is defined as:
\[
P_{ij}^k=\frac{exp(e^k_{ij})}{exp(e_{ii}^k)+\sum_{l\in\mathcal{N}_i}exp(e^k_{il})}.
\]
where $e^k_{ij}= g^k(X^{k}_i,X^k_j,\theta^k_{ij})$ with learnable parameters $\theta^k_{ij}$.
We postulate the symmetry $e^k_{ij}=e^k_{ji}$ for theoretical convenience, and define $\omega^k_{ij}=\exp(e^l_{ij})$ along with $\mu_i^k=\exp(e_{ii}^k) + \sum_{l\in\mathcal{N}_i} \exp(e^k_{il})$. This setup allows $P^k$ to be interpreted as a aggregation matrix on a weighted graph $G=(V,E,\omega^k,\mu^k)$. The aggregation matrix in GCN notably satisfies $e^k_{ij}=e^k_{ji}$. Recall the definition of node similarity measure and over-smoothing in \citep{rusch2023survey}.
\begin{definition}(Over-smoothing)
    Let G be an undirected, connected graph, and $X^k\in\mathbb{R}^{n\times d}$ denote the k-th layer hidden features of an N-layer GNN defined on G. Moreover, we call $\mathcal{E}:\mathbb{R}^{n\times d}\rightarrow \mathbb{R}_{\geq 0}$ a node similarity measure if it satisfies the following axioms:
    \begin{itemize}
        \item $\exists c\in\mathbb{R}^{d} $ with $X(i)=c$ for all nodes $i\in V$ if and only if $\mathcal{E}(X)=0$ , for $X\in\mathbb{R}^{n\times d}$
        \item $\mathcal{E}(X+Y)\leq\mathcal{E}(X)+\mathcal{E}(Y)$ , for all $X,Y\in\mathbb{R}^{n\times d}$
    \end{itemize}
    We then define over-smoothing with respect to $\mathcal{E}$ as the layer-wise exponential convergence of the node-similarity measure $\mathcal{E}$ to zero, that is for $k=0,\dots,N$ and some constants $C_1,C_2\textgreater0$
    \begin{equation*}
        \mathcal{E}(X^k)\leq C_1e^{-C_2k}
    \end{equation*}
\end{definition} 
The m-th derivative energy, which is defined as
\[
\mathcal{E}_{m}(X):=\frac{1}{n}\int_{G}\vert\nabla_{\mu}^mX\vert^{2}d\mu,
\]
is known to be a node similarity measure \citep{guan2025measuringoversmoothingdirichletenergy}, where  $m\in \mathbb{N}$. The explicit expression is as follows 
\[
    \vert\nabla_{\mu}^m X\vert^2 = \left\{
    \begin{array}{cc}
           \vert(-\Delta_{\mu})^{\frac{m}{2}}X\vert^2 & \text{if\enspace m\enspace is\enspace even\enspace } \\
           \vert\nabla_{\mu}(-\Delta_{\mu})^{\frac{m-1}{2}}X\vert^2 & \text{if\enspace m\enspace is\enspace odd }\\
           
    \end{array}
    \right.
\]
The case $m=0$ is exactly measure proposed by \cite{NEURIPS2023_6e4cdfdd}, the case $m=1$ is exactly Dirichlet energy and the case $m=2$ will be termed as Laplacian energy. A more familiar expression for Laplacian energy is that
\[
\frac{1}{n}\int_{G}\vert \Delta_{\mu}X\vert^{2}d\mu=\frac{1}{n}\sum_{i\in[n]}\vert\Delta_{\mu}X(i)\vert^2\mu_i.
\]
We will specify $\omega_{ij}=1$ and $\mu_i=deg_i+1$ for Laplacian energy if there is no further specification. Recall that all these measures are equivalent \citep{guan2025measuringoversmoothingdirichletenergy}.
\begin{lemma}\label{compare}
    $G = (V,E,\omega,\mu)$ is a weighted graph. Then for any function X on the graph and all $m_1,m_2\in\mathbb{N}$, There exist universal constants $C_1, C_2\textgreater0$ such that
    \[
        C_2\int_{G}\vert\nabla_{\mu}^{m_1} X\vert^2d\mu\leq\int_{G}\vert\nabla_{\mu}^{m_2}X\vert^2d\mu\leq C_1\int_{G}\vert\nabla_{\mu}^{m_1} X\vert^2d\mu
    \]
\end{lemma}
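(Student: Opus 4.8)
The plan is to reduce both derivative energies to quadratic forms in the Laplacian and then compare those forms through the spectrum of $-\Delta_{\mu}$. First I would record the graph integration-by-parts (Green's) identity. Using the symmetry $\omega_{ij}=\omega_{ji}$ and reindexing the double sum over ordered adjacent pairs, one checks that $-\Delta_{\mu}$ is self-adjoint and positive semidefinite with respect to the $\mu$-weighted inner product $\langle X,Y\rangle_{\mu}:=\int_{G}X\cdot Y\,d\mu$, and that for every $m$
\[
\int_{G}\vert\nabla_{\mu}^{m}X\vert^{2}\,d\mu=c_{m}\,\langle X,(-\Delta_{\mu})^{m}X\rangle_{\mu},
\]
where $c_{m}=1$ when $m$ is even (the squared-norm definition yields the inner product directly) and $c_{m}=2$ when $m$ is odd (the gradient form counts each edge twice). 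In either case $c_{m}\in\{1,2\}$ is a universal constant, hence harmless for an equivalence statement. It is precisely here that the hypothesis $\omega_{ij}=\omega_{ji}$ is essential, since it is what makes $-\Delta_{\mu}$ symmetric for $\langle\cdot,\cdot\rangle_{\mu}$ rather than for the Euclidean product.

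Next I would diagonalize. Because $-\Delta_{\mu}$ is self-adjoint for $\langle\cdot,\cdot\rangle_{\mu}$, there is a $\mu$-orthonormal eigenbasis $\{\phi_{k}\}_{k=0}^{n-1}$ with eigenvalues $0=\lambda_{0}<\lambda_{1}\le\cdots\le\lambda_{n-1}$; connectedness of $G$ forces $\lambda_{0}=0$ to be simple with the constant eigenvector, so the spectral gap $\lambda_{1}>0$ is strictly positive. Writing $X=\sum_{k}c_{k}\phi_{k}$ with vector coefficients $c_{k}\in\mathbb{R}^{d}$, the identity above becomes
\[
\int_{G}\vert\nabla_{\mu}^{m}X\vert^{2}\,d\mu=c_{m}\sum_{k=1}^{n-1}\lambda_{k}^{m}\,\Vert c_{k}\Vert^{2},
\]
in which the $k=0$ term drops out for every $m\ge 1$ since $\lambda_{0}=0$; this is exactly what makes the $\mathcal{E}_{m}$ node-similarity measures that vanish on constants.

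Finally I would compare the two spectral sums termwise. For every $k\ge 1$ we have $\lambda_{1}\le\lambda_{k}\le\lambda_{n-1}$ with $\lambda_{1}>0$, so
\[
\min\!\bigl(\lambda_{1}^{\,m_{2}-m_{1}},\lambda_{n-1}^{\,m_{2}-m_{1}}\bigr)\,\lambda_{k}^{m_{1}}\le\lambda_{k}^{m_{2}}\le\max\!\bigl(\lambda_{1}^{\,m_{2}-m_{1}},\lambda_{n-1}^{\,m_{2}-m_{1}}\bigr)\,\lambda_{k}^{m_{1}}.
\]
Summing these inequalities against $\Vert c_{k}\Vert^{2}$ and folding in the bounded constants $c_{m_{1}},c_{m_{2}}\in\{1,2\}$ produces the desired two-sided bound, with explicit universal constants $C_{1},C_{2}$ depending only on the graph spectrum $\lambda_{1},\lambda_{n-1}$ and on $m_{1},m_{2}$.

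I expect the main obstacle to be the careful handling of the kernel of $-\Delta_{\mu}$: the lower, Poincaré-type bound is simply false if one retains the constant component, because the smallest eigenvalue is $0$ and no positive multiple of $\lambda_{k}^{m_{1}}$ can control it. The argument survives precisely because every derivative energy with $m\ge 1$ annihilates that component, so the effective spectrum is bounded below by the gap $\lambda_{1}>0$. Establishing self-adjointness and the eigen-decomposition with respect to the non-standard $\mu$-weighted inner product, together with connectedness to guarantee simplicity of $\lambda_{0}$, is the technical heart of the proof; the spectral comparison itself is then routine.
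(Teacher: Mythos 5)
Your argument is correct for $m_1,m_2\ge 1$, but note that the paper itself does not prove this lemma: it is imported from the cited reference (Guan and Shi, 2025), and the only ingredient the paper establishes in-house is the integration-by-parts identity of Lemma A.1, which is exactly your first step. From there your route --- self-adjointness and positive semidefiniteness of $-\Delta_\mu$ with respect to $\langle X,Y\rangle_\mu=\int_G X\cdot Y\,d\mu$, the identity $\int_G\vert\nabla_\mu^m X\vert^2 d\mu=c_m\langle X,(-\Delta_\mu)^m X\rangle_\mu$ with $c_m\in\{1,2\}$, spectral decomposition, and termwise comparison of $\lambda_k^{m_1}$ against $\lambda_k^{m_2}$ over the nonzero spectrum --- is sound and gives explicit constants $\min(\lambda_{\min}^{m_2-m_1},\lambda_{\max}^{m_2-m_1})$ and $\max(\lambda_{\min}^{m_2-m_1},\lambda_{\max}^{m_2-m_1})$ (up to the factor $c_{m_2}/c_{m_1}$), where $\lambda_{\min}$ denotes the smallest \emph{nonzero} eigenvalue; connectedness is not actually needed, since components of $X$ in the kernel of $-\Delta_\mu$ contribute zero to both sides for every $m\ge 1$. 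Two caveats are worth making explicit. First, the statement is false as literally written if $\mathbb{N}$ includes $0$ and $\vert\nabla_\mu^0 X\vert^2=\vert X\vert^2$: taking $X$ constant, $m_1=0$, $m_2=1$ gives a positive left-hand side and zero right-hand side, so either $m\ge 1$ must be assumed or the $m=0$ energy must be understood with the consensus component projected out (as in the measure of Wu et al.\ that the paper identifies with $m=0$); you correctly flagged this as the crux. Second, the constants are ``universal'' only in the paper's sense of being independent of $X$ --- they depend on the graph through its spectrum and on $m_1,m_2$ --- which matters downstream, since the decay rates in Section 4 inherit this graph dependence.
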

The \textit{curse of depth}, originally formalized by~\citet{sun2025cursedepthlargelanguage}, refers to the phenomenon where deeper layers contribute negligibly to representation learning. While prior work quantified this effect using accumulated variance, we empirically observe in this work that the Laplacian energy exhibits a similar accumulation pattern, following a power-law growth. To align our analysis with the over-smoothing metric (i.e., Laplacian energy), we characterize the \textit{curse of depth} via the \textit{relative node similarity measure} and formally define it as follows.
\begin{definition}[\textbf{The Curse of Depth}]
    Let $\mathcal{E}$ be the node similarity measure as established in Definition 2.1. Consider an $N$-layer GNN with layer indices $k=0,\dots,N$. We define the \textit{relative node similarity} $\mathcal{R}(X^{k})$ as the relative rate of change in energy:
    \[
    \mathcal{R}(X^{k}) = \frac{\mathcal{E}(X^{k}) - \mathcal{E}(X^{k-1})}{\mathcal{E}(X^{k-1})}.
    \]
    We say that the model suffers from the \textit{curse of depth} with respect to $\mathcal{E}$ if $\mathcal{E}(X^k)$ is a growth function in terms of $k$ and the relative node similarity converge to zero as $k\rightarrow\infty$.
\end{definition}

\section{Pre-LN versus Post-LN}
\subsection{Placement of LN}
We give the special expression for attention of different attention based GNNs in Table 1. We use self-attention neural network(SAN) to denote GNNs using self-attention. 
\begin{table}[H]
  \caption{Expression of attention for different attention based GNNs}
  \label{sample-table}
  \centering
  \begin{tabular}{llllllll}
    \toprule
    Models  & $e_{ij}$  \\
    \midrule
    $\text{GCN}$&$e_{ij}=1$\\
     $\text{GAT}$&$e_{ij}=\text{LeakyReLU}(a^{T}[WX(i)\vert WX(j)])$\\
    SAN&$e_{ij}=\frac{1}{\sqrt{d}}KX(i)\cdot QX(j)$\\
    \bottomrule
  \end{tabular}
\end{table}
\begin{figure}
  \centering
  \includegraphics[width=0.8\textwidth]{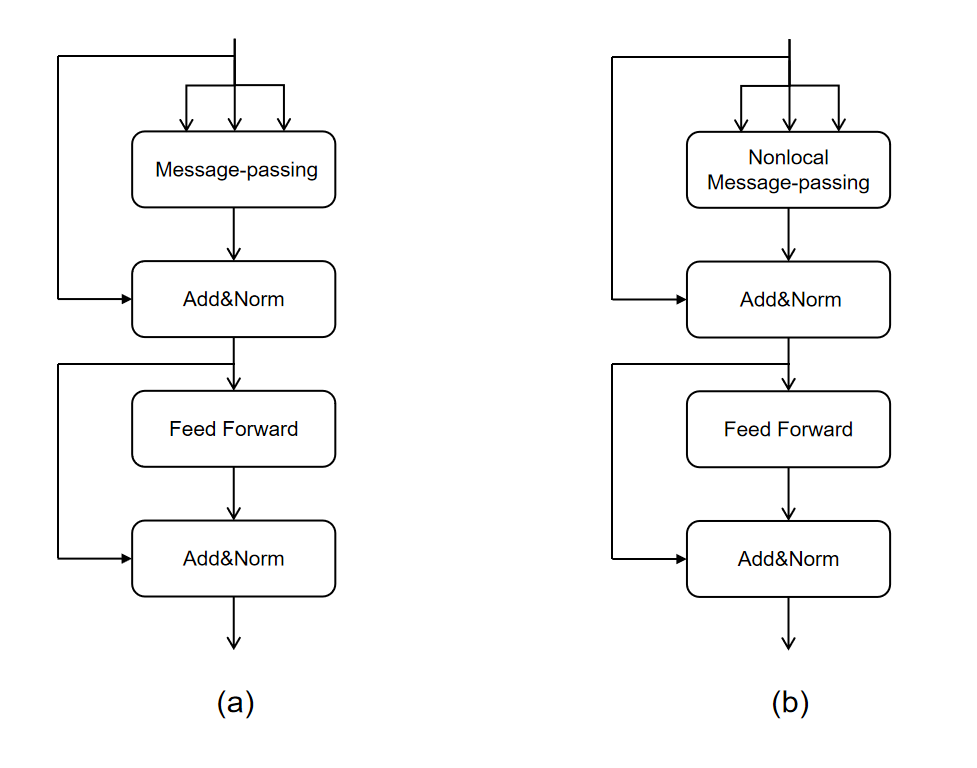}
  \caption{(a) The Post-LN model. (b) Our proposed nonlocal model with Post-LN.}
  \label{model}
\end{figure}
For all models in this work, the encoder is composed of a dropout, and a MLP. The decoder is only a MLP without activation. 
\paragraph{Hidden layers}
A single hidden layer of Post-LN architecture is depicted in the Figure \ref{model}(a). The update formula in message-passing step of Post-LN model is that
\begin{equation*}
    \textbf{MP}(X^{k})=(P^k_1X^{k}V^{k}_1\Vert P^k_2X^{k}V^{k}_2\Vert...\Vert P^k_hX^{k}V^{k}_h)W^k,
\end{equation*}
where h is the number of attention heads, $V^{k}_i$ and $W^{k}$ are learnable matrices. The overall formula of hidden layer is that
\begin{equation*}
    \begin{aligned}
        & Y^{k}= \textbf{Norm}(X^{k}+\textbf{MP}(X^{k}))\\
        & \tilde{Y}^k = FFN(Y^{k})\\
        & X^{k+1}= \textbf{Norm}(Y^{k}+\tilde{Y}^k)
    \end{aligned}
\end{equation*}

Pre-LN is much used in modern LLMs, For the Pre-LN model the position of layer normalization is putted forward. The overall formula for Pre-LN model is as follows
\begin{equation*}
    \begin{aligned}
        & Y^{k}= X^{k}+\textbf{MP}(\textbf{Norm}(X^k))\\
        & X^{k+1}= Y^{k}+\textbf{FFN}(\textbf{Norm}(Y^k))
    \end{aligned}
\end{equation*}
\subsection{Analysis of over-smoothing}
In this subsection, we empirically demonstrate that the Pre-LN SAN architecture is robust to over-smoothing, whereas the Post-LN counterpart is susceptible to it. We focus on the task of fully supervised node classification. To systematically analyze the impact of depth, we evaluate Pre-LN SAN with varying network depths selected from $\{2, 32, 64, 128, 256\}$. Detailed training configurations are provided in the Appendix \ref{Setup}. Our experiments are conducted on five standard benchmark datasets: Cora, Citeseer, and Pubmed \citep{yang2016revisiting}, as well as Coauthor CS and Coauthor Physics \citep{shchur2019pitfallsgraphneuralnetwork}. For all datasets, we employ a random split of $60\%/20\%/20\%$ for training, validation, and testing, respectively. 

To characterize the intrinsic dynamical behavior of the architecture prior to optimization, the following Table 2 presents the Laplacian energy of the final-layer output for the \textbf{untrained} SAN with Pre-LN.
\begin{table}[H]
  \caption{The Laplacian energy of Pre-LN SAN at initialization}
  \label{sample-table}
  \centering
  \begin{tabular}{lllllll}
    \toprule
    Layers  &2 Layers &32 Layers& 64 Layers&128 Layers & 256 Layers \\
    Datasets  \\
    \midrule
    \multirow{1}{*}{Cora} &1.70&6.56&9.86&23.69&23.23 \\
    \multirow{1}{*}{Citeseer} &1.24&5.91&8.24&11.69&14.61   \\
    \multirow{1}{*}{Pubmed}  &0.116&0.194&0.229&0.381&0.404   \\
    \multirow{1}{*}{Physics}  &1.55&6.38&9.72&18.82&31.18 \\
    \multirow{1}{*}{CS} &6.96&39.97&44.41&67.75&104.23  \\
    \bottomrule
  \end{tabular}
\end{table}
As evidenced in the table, the Laplacian energy exhibits a consistent increase as network depth grows. This trend indicates that SAN with Pre-LN effectively prevents the over-smoothing issue typically observed in deep GNNs. In terms of classification performance, we report the training, validation, and test accuracies corresponding to the epoch with the highest validation accuracy (i.e., peak validation performance). Table 3 details the results for the Cora dataset. Due to space constraints, the experimental results for the remaining four datasets are provided in the Appendix \ref{Prenorm}.
\begin{table}[H]
  \caption{Accuracy of Pre-LN SAN on Cora}
  \label{sample-table}
  \centering
  \begin{tabular}{lllllll}
    \toprule
    Layers  &2 Layers &32 Layers& 64 Layers&128 Layers & 256 Layers \\
    Sets &Acc  & Acc &Acc & Acc &Acc \\
    \midrule
    \multirow{1}{*}{Training set} &100&99.08&93.97&97.23&100\\
    \multirow{1}{*}{Validation set}&86.32&86.51&86.69&87.43&87.80 \\
    \multirow{1}{*}{Test set} &88.40&87.11&88.03&86.92&87.29 \\
    \bottomrule
  \end{tabular}
\end{table}
We observe that the model maintains trainability, showing no degradation in training or validation accuracy.

We now turn our attention to the Post-LN SAN configuration. Following the same experimental protocol, we train models with depths $L \in \{2, 32, 64, 128, 256\}$; full training details are provided in the Appendix \ref{Setup}. Table~4 presents the Laplacian energy of the outputs for these \textbf{untrained} models. Additionally, Table~5 reports the classification performance (training, validation, and test accuracy) corresponding to the best validation checkpoint. We report results of Cora here and put others in the Appendix \ref{Prenorm}.
\begin{table}\label{Table:Post-LNLaplacian}
  \caption{The Laplacian energy of Post-LN SAN at initialization}
  \label{sample-table}
  \centering
  \begin{tabular}{lllllll}
    \toprule
    Layers  &2 Layers &32 Layers& 64 Layers&128 Layers & 256 Layers \\
    Datasets  \\
    \midrule
    \multirow{1}{*}{Cora}  &0.558&$1.04\times 10^{-5}$&$2.10\times 10^{-6}$&$3.82\times 10^{-7}$&$1.32\times 10^{-12}$  \\
    \multirow{1}{*}{Citeseer}  &0.301&$1.14\times 10^{-5}$&$6.29\times 10^{-7}$&$6.29\times 10^{-7}$&$1.17\times 10^{-12}$  \\
    \multirow{1}{*}{Pubmed} &$2.41\times 10^{-2}$&$4.08\times 10^{-6}$&$2.57\times 10^{-7}$&$4.62\times 10^{-9}$ &$7.55\times 10^{-12}$    \\
    \multirow{1}{*}{Physics} &0.379&$4.10\times 10^{-5}$&$2.30\times 10^{-7}$ &$1.04\times 10^{-9}$&$7.40\times 10^{-14}$\\
    \multirow{1}{*}{CS} &1.40&$3.71\times 10^{-5}$&$6.60\times 10^{-6}$&$2.57\times 10^{-7}$&$3.10\times 10^{-13}$   \\
    \bottomrule
  \end{tabular}
\end{table}
\begin{table}
  \caption{Accuracy of Post-LN SAN on Cora}
  \label{sample-table}
  \centering
  \begin{tabular}{lllllll}
    \toprule
    Layers  &2 Layers &32 Layers& 64 Layers&128 Layers & 256 Layers \\
    Sets &Acc  & Acc &Acc & Acc &Acc \\
    \midrule
    \multirow{1}{*}{Training set} &97.11&91.32&96.06&84.11&31.65\\
    \multirow{1}{*}{Validation set} &86.14&85.77&85.77&77.82&29.57\\
    \multirow{1}{*}{Test set} &87.66&86.19&86.74&81.58&29.83 \\
    \bottomrule
  \end{tabular}
\end{table}
As evident from the results, the Laplacian energy undergoes an exponential decay with increasing depth, signifying severe over-smoothing of the node features. This feature collapse precipitates a rapid degradation in model performance. Crucially, accuracy drops precipitously not only on the test set but also on the training and validation sets. This indicates that the deep Post-LN architecture suffers from fundamental optimization difficulties, rendering the model effectively untrainable in deeper regimes.

Also, we compare a 256-layer Pre-LN SAN with a 256-layer Post-LN SAN. As illustrated in Figure\ref{fig: log-log plots of Laplacian energy}(a)(c), the Laplacian energy of the layer outputs $X^k$ in the Pre-LN SAN increases. In contrast, the Post-LN SAN exhibits a exponentially decrease in Laplacian energy across layers. This suggests that Pre-LN SANs are biased towards the identity branch, primarily accumulating Laplacian energy while exhibiting diminishing relative changes in deeper layers, which leads to inefficient depth utilization. Conversely, the Post-LN architecture—which applies normalization after the residual addition—effectively mixes information from both the residual and identity branches. However, this aggressive mixing can lead to over-smoothing in deep GNNs.
\section{Understand and Overcome over-smoothing}\label{Migrate}
\subsection{Theoretical analysis}
For Post-LN models, LN plays a role in mixing information of residual branch and identity branch. Over-smoothing is caused by residual branch, hence we can simply consider attention based GNNs. Recall that attention based GNNs can be viewed as the discretization of heat equation \citep{wang2021dissecting}
\[
\frac{\partial X_t}{\partial t}= \Delta_{\mu} X_{t}.
\]
 We recall the reason behind the over-smoothing which is presented in \citep{guan2025measuringoversmoothingdirichletenergy}. Under the heat equation, the Dirichlet energy will evolve as follows.
\[
\frac{\partial \int_{G}\vert\nabla_{\mu} X_t\vert^2d\mu}{\partial t}= -2\int_{G}\vert\Delta_{\mu} X_t\vert^2d\mu .
\]
By Lemma \ref{compare}, for some constant $C_1,C_2$ we have 
\[
C_1\int_{G}\vert\nabla_{\mu} X_t\vert^2d\mu\leq \int_{G}\vert\Delta_{\mu} X_t\vert^2d\mu\leq C_2\int_{G}\vert\nabla_{\mu} X_t\vert^2d\mu
\]
Then 
\[
-2C_2\int_{G}\vert\nabla_{\mu} X_t\vert^2d\mu\leq\frac{\partial \int_{G}\vert\nabla_{\mu} X_t\vert^2}{\partial t}\leq -2C_1\int_{G}\vert\nabla_{\mu} X_t\vert^2d\mu
\]
solving this we have
\[
-2C_2\leq \frac{\partial\big{(}ln(\int_{G}\vert\nabla_{\mu}X_t\vert^2d\mu))\big{)}}{\partial t}\leq -2C_1.
\]
Hence, we have
\[
e^{-2C_2t}\int_{G}\vert\nabla_{\mu} X_0\vert^2d\mu \leq\int_{G}\vert\nabla_{\mu} X_t\vert^2d\mu\leq e^{-2C_1t}\int_{G}\vert\nabla_{\mu} X_0\vert^2d\mu .
\]
By above heuristic computation, if under some evolving equation we have
\[
\frac{\int_{G}\vert\nabla_{\mu}X_t\vert^2d\mu}{\partial t}\sim -(\int_{G}\vert\nabla_{\mu}X_t\vert^2d\mu)^{\beta}
\]
for some $\beta>1$, then we have 
\[
\frac{\partial}{\partial t}(\frac{1}{(\int_{G}\vert\nabla_{\mu}X_t\vert^2d\mu)^{\beta -1}})\sim 1 .
\]
Hence, solving this we will have the algebraically decay rate with some constants $C_1,C_2>0$.
\[
min(C_2\int_{G}\vert\nabla_{\mu}X_0\vert^2d\mu,C_2\frac{1}{t^{\frac{1}{\beta-1}}})\leq \int_{G}\vert\nabla_{\mu}X_t\vert^2d\mu\leq C_1\frac{1}{t^{\frac{1}{\beta-1}}}.
\]
When $\beta\rightarrow1$, the decay rate will be faster than any algebraically decay rate which is exactly exponentially decay rate. In the next theorem, we give a simple evolving equation which can achieve our goal for $\beta=2$.
\begin{theorem}
    Let $G=(V,E,\omega,\mu)$ be a finite graph, $X_0\in\mathbb{R}^{n\times d}$ is the initial feature, then under the non-local diffusion
    \[
    \frac{\partial X_t}{\partial t}=(\int_{G}\vert\Delta_{\mu} X_t\vert^2d\mu)\Delta_{\mu} X_t
    \]
    we have the algebraically decay rate
    \[
min(C_2\int_{G}\vert\nabla_{\mu}X_0\vert^2d\mu,C_2\frac{1}{t})\leq \int_{G}\vert\nabla_{\mu}X_t\vert^2d\mu\leq C_1\frac{1}{t}.
\]
\end{theorem}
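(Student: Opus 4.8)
The plan is to reduce the PDE to an autonomous scalar differential inequality for the Dirichlet energy $E(t):=\int_{G}\vert\nabla_{\mu}X_t\vert^2 d\mu$ and then integrate it by comparison, exactly as in the heuristic for general $\beta$ preceding the statement, now made rigorous with $\beta=2$. The essential tool is the discrete integration-by-parts (summation-by-parts) identity on a finite weighted graph, which is what produced the factor in the heat-equation computation above: differentiating $E(t)$ in time and moving the derivative inside the finite sum gives $\frac{d}{dt}E(t)=2\int_{G}\nabla_{\mu}X_t\cdot\nabla_{\mu}(\partial_t X_t)\,d\mu=-2\int_{G}\Delta_{\mu}X_t\cdot\partial_t X_t\,d\mu$.

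First I would substitute the non-local equation. Writing the scalar coefficient as $A(t):=\int_{G}\vert\Delta_{\mu}X_t\vert^2 d\mu$, the evolution reads $\partial_t X_t=A(t)\,\Delta_{\mu}X_t$, so the coefficient pulls straight out of the integral:
\[
\frac{d}{dt}E(t)=-2\,A(t)\int_{G}\vert\Delta_{\mu}X_t\vert^2 d\mu=-2\,A(t)^2.
\]
This is the crucial step: the non-locality converts the linear dissipation $-2A$ of the ordinary heat flow into the quadratic dissipation $-2A^2$, which is precisely the $\beta=2$ regime the whole construction targets.

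Next I would invoke Lemma \ref{compare} with $m_1=1$ and $m_2=2$ to sandwich the Laplacian energy by the Dirichlet energy, i.e. there are universal constants with $c_1 E(t)\le A(t)\le c_2 E(t)$. Squaring and inserting into the identity yields the two-sided differential inequality $-2c_2^2 E^2\le E'\le -2c_1^2 E^2$. Assuming $E(0)>0$ (if $E(0)=0$ then $X_0$ is already constant, a fixed point of the flow, and the claimed lower bound is trivially $0$), I would pass to the reciprocal $u(t)=1/E(t)$, for which $u'=-E^{-2}E'$, turning the inequality into the linear sandwich $2c_1^2\le u'\le 2c_2^2$. A short continuity argument (the bound $u'\le 2c_2^2$ prevents $u$ from blowing up in finite time) shows $E$ stays strictly positive on $[0,\infty)$, so this manipulation is legitimate for all $t$.

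Finally I would integrate $u'$ from $0$ to $t$. The lower bound $u(t)\ge u(0)+2c_1^2 t\ge 2c_1^2 t$ gives immediately $E(t)\le \frac{1}{2c_1^2 t}=C_1/t$, the stated upper bound. The upper bound $u(t)\le u(0)+2c_2^2 t$ requires a two-case split to match the $\min$ form: when $2c_2^2 t\le u(0)$ the right-hand side is at most $2u(0)$, forcing $E(t)\ge E(0)/2$; when $2c_2^2 t> u(0)$ it is at most $4c_2^2 t$, forcing $E(t)\ge 1/(4c_2^2 t)$. Choosing $C_2=\min(1/2,\,1/(4c_2^2))$ delivers $E(t)\ge\min(C_2 E(0),\,C_2/t)$. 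I expect the only subtle points to be the verification of the summation-by-parts identity (carefully using the edge symmetry $\omega_{ij}=\omega_{ji}$) and the bookkeeping of constants through the case analysis; the reciprocal-energy integration itself is the standard trick already sketched for general $\beta$.
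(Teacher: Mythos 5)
Your proposal is correct and follows essentially the same route as the paper: integration by parts gives $\frac{d}{dt}\int_G|\nabla_\mu X_t|^2d\mu=-2\big(\int_G|\Delta_\mu X_t|^2d\mu\big)^2$, the norm-equivalence lemma converts this into a two-sided quadratic differential inequality, and integrating the reciprocal energy yields the stated bounds. You actually carry out the final comparison/integration step (the case split producing the $\min$, and the positivity of $E$) explicitly, whereas the paper delegates it to the heuristic $\beta=2$ computation in Section 4.1.
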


We now present our Nonlocal Message-passing Algorithm.
\paragraph{Architecture of nonlocal message passing with Post-LN}
The architecture of hidden layers is shown in Figure\ref{model}(b). Our nonlocal model with Pos-LN only differs in the message-passing step which is called nonlocal message passing. That is 
\begin{equation*}
    \textbf{NonlocalMP}(X^{k})=\bigg{(}\frac{1}{n}\Vert P^{k}_1X^{k}-X^{k}\Vert^2P^k_1X^{k}V^{k}_1\bigg{\Vert} ...\bigg{\Vert} \frac{1}{n}\Vert P^{k}_hX^{k}-X^{k}\Vert^2P^k_hX^{k}V^{k}_h\bigg{)}W^k.
\end{equation*}
That is we multiply Laplacian energy $\frac{1}{n}\Vert P^{k}_iX^{k}-X^{k}\Vert^2$ before each aggregation $P^{k}_iX^{k}$
Intuitively, If over-smoothing occurs and Laplacian energy $\frac{1}{n}\Vert P^{k}_iX^{k}-X^{k}\Vert^2$ is every small, then the residual branch $\frac{1}{n}\Vert P^{k}_iX^{k}-X^{k}\Vert^2P_i^{k}X^{k}V^{k}$ will be small at the same time, hence aggregation term $P^{k}_iX^{k}$ do not influence much.

\paragraph{Computational complexity of our proposed method}
Since $P^{k}_iX^{k}$ is already computed, the extra computation of Laplacian energy only require $O(nd)$.
\subsection{Empirical analysis}
In this subsection, we provide empirical evidence that our proposed method effectively mitigates the over-smoothing issue. Adopting a consistent experimental setting, we report the Laplacian energy at initialization and the classification accuracy corresponding to the best-performing model on the validation set. Due to space limitations, we present the results on the Cora dataset here as a representative example, while detailed results for the remaining datasets are provided in the Appendix \ref{nonlocal}.
\begin{table}[H]
  \caption{The Laplacian energy of nonlocal SAN with Post-LN at initialization}
  \label{sample-table}
  \centering
  \begin{tabular}{lllllll}
    \toprule
    Layers  &2 Layers &32 Layers& 64 Layers&128 Layers & 256 Layers \\
    Datasets \\
    \midrule
    \multirow{1}{*}{Cora} &0.793&$6.47\times 10^{-3}$&$2.36\times 10^{-3}$&$1.28\times 10^{-3}$&$8.55\times 10^{-5}$  \\
    \multirow{1}{*}{Citeseer} &0.489&$8.43\times 10^{-3}$&$2.58\times 10^{-3}$&$2.91\times 10^{-3}$&$4.39\times 10^{-5}$ \\
    \multirow{1}{*}{Pubmed}  &$3.56\times 10^{-2}$&$1.98\times 10^{-3}$&$1.55\times 10^{-3}$&$5.91\times 10^{-5}$&$2.13\times 10^{-5}$  \\
    \multirow{1}{*}{Physics}&0.812&$7.71\times 10^{-3}$&$2.64\times 10^{-3}$&$2.70\times 10^{-3}$&$1.04\times 10^{-4}$\\
    \multirow{1}{*}{CS} &3.26&$3.13\times 10^{-2}$&$1.65\times 10^{-2}$&$4.72\times 10^{-3}$&$3.04\times 10^{-4}$  \\
    \bottomrule
  \end{tabular}
\end{table}
\begin{table}[H]
  \caption{Accuracy of nonlocal SAN with Post-LN on Cora}
  \label{sample-table}
  \centering
  \begin{tabular}{lllllll}
    \toprule
    Layers  &2 Layers &32 Layers& 64 Layers&128 Layers & 256 Layers \\
    Sets &Acc  & Acc &Acc & Acc &Acc \\
    \midrule
    \multirow{1}{*}{Training set}&100&92.98&91.69&90.89&92.12 \\
    \multirow{1}{*}{Validation set} &85.77&87.62&86.14&87.43&85.40\\
    \multirow{1}{*}{Test set}  &88.77&87.29&87.11&88.03&87.85 \\
    \bottomrule
  \end{tabular}
\end{table}
We observe that the Laplacian energy decays gradually. Crucially, unlike Post-LN SAN, our proposed nonlocal SAN with Post-LN facilitates trainability in deeper layers.
\section{Analysis of the curse of depth}
 While modern LLMs predominantly adopt Pre-LN to mitigate issues such as gradient instability, recent studies suggest that Pre-LN Transformers may suffer from inefficient depth utilization. \citep{csordás2025languagemodelsusedepth} analyzed models including Llama 3.1, Qwen 3, and QLMo 2, measuring the contribution norm of each layer relative to the residual stream. They observed that beyond the middle layers, the models tend to amplify existing features rather than substantially modifying the residual stream, resulting in diminishing representation updates in deeper layers. Similar phenomena in the Qwen 2.5 family have been reported by \citep{hu2025what}.

\citep{sun2025cursedepthlargelanguage} formalized this limitation as the 'curse of depth' in LLMs, which refers to the marginal contribution of deeper layers to both learning and representation. By performing layer pruning without fine-tuning, they demonstrated that removing deeper layers in Pre-LN models (e.g., Qwen 3) incurs only minor performance drops—or even performance gains—whereas Post-LN models (e.g., BERT) suffer significant degradation. Furthermore, analysis of the angular distance between layer representations revealed that deeper layers in Pre-LN models produce outputs highly similar to their preceding layers, indicating a high degree of redundancy.

In this section, we provide empirical evidence within the context of GNNs, corroborating the findings of \citep{sun2025cursedepthlargelanguage}. Figure \ref{fig: inter-layer cosine similarity} presents the cosine similarity between layer representations. For two layer representations $X^{s}, X^{t} \in \mathbb{R}^{n \times d}$, the cosine similarity is defined as:
\begin{equation}
    \text{sim}(X^{s},X^{t}) = \frac{1}{n}\sum_{i\in[n]}\frac{ X^{s}(i)\cdot X^{t}(i) }{| X^{s}(i) | | X^{t}(i) |}.
\end{equation}
 In the non-local SAN with Post-LN, representations from distant layers display low similarity, indicating continuous feature transformation. In contrast, deeper layers in the Pre-LN SAN exhibit increasingly high similarity, reinforcing the hypothesis of reduced effective feature transformation in deep Pre-LN models.

Also, we perform a layer sensitivity analysis to assess the impact of specific layers on the final output. We prune individual layers from the set $\{2, 32, 64, 96, 128, 160, 192, 244\}$ and evaluate the resulting test accuracy, as detailed in Table~8. For the nonlocal SAN with Post-LN, pruning any intermediate layer leads to a degradation in accuracy compared to the baseline of 87.85\%, implying that all layers contribute to the learned representation. Strikingly, for the Pre-LN model, pruning layers beyond depth 64 has a negligible impact on accuracy. This finding indicates that deep layers in the Pre-LN architecture contribute minimally to the final representation, effectively validating the existence of the 'curse of depth'
\begin{table}[H]
  \caption{Impact of pruning individual layers on the accuracy(Cora)}
  \label{sample-table}
  \centering
  \begin{tabular}{lllllllll}
    \toprule
    Layers  &2  &32& 64 &96&128  &160&192 &224   \\
    methods &Acc  & Acc &Acc & Acc &Acc &Acc&Acc&Acc\\
    \midrule
    \multirow{1}{*}{nonlocal SAN with Post-LN} &87.66&87.11&86.74&86.92&87.29&87.11&86.92&87.66 \\
    \multirow{1}{*}{Pre-LN SAN} &81.58&86.56&87.29&87.29&87.29&87.29&87.29&87.29\\
    \bottomrule
  \end{tabular}
\end{table}
Complementing the aforementioned layer sensitivity analysis, we further characterize the \textit{curse of depth} via the power-law growth of the realative Laplacian energy. we already give a definition for the curse of depth in Definition 2.1. As illustrated in Figure \ref{fig: log-log plots of Laplacian energy}(c), the evolution of Laplacian energy exhibits a near-linear trend on a log-log scale, indicative of power-law growth. Corroborating evidence for additional datasets is provided in the Appendix \ref{curse}. Hence these results validate our definition. Theoretically, we can derive a estimate of decat rate of relative Laplacian energy for the continuous formulation of Pre-LN models in Theorem 5.1. 
\begin{theorem}
    For the evolving equation 
    \[
\frac{\partial X_t}{\partial t}=P_{\mu}(Norm(X_t))
\]
There exist constant $C_1>0$ such that we have $\int_{G}\vert \nabla X_t\vert^2d\mu \leq(C_1t +\sqrt{\int_{G}\vert \nabla_{\mu}X_0\vert^2d\mu})^2$. Furthermore, assuming a power-law lower bound for $\int_{G}\vert \nabla_{\mu} X_t\vert^2d\mu$
\[
C_2 t^{q}\leq \int_{G}\vert \nabla_{\mu} X_t\vert^2d\mu, 
\]
we can get a power-law decay for relative energy
\[
\frac{1}{\int_{G}\vert \nabla_{\mu} X_t\vert^2d\mu }\frac{\partial \int_{G}\vert \nabla_{\mu} X_t\vert^2d\mu }{\partial t}\leq C\frac{1}{t^{\frac{q}{2}}}.
\]
\end{theorem}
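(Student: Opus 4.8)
The plan is to control the Dirichlet energy $D(t):=\int_{G}\vert\nabla_{\mu}X_t\vert^2 d\mu$ by tracking its square root, so that a quadratic-in-time bound becomes a linear differential inequality. Writing $N_t:=\mathrm{Norm}(X_t)$, the dynamics read $\dot{X}_t=P_{\mu}N_t$. Differentiating and using the symmetry of the gradient bilinear form gives $\dot{D}(t)=2\int_{G}\nabla_{\mu}X_t\cdot\nabla_{\mu}\dot{X}_t\,d\mu$, hence
\[
\frac{d}{dt}\sqrt{D(t)}=\frac{1}{\sqrt{D(t)}}\int_{G}\nabla_{\mu}X_t\cdot\nabla_{\mu}\dot{X}_t\,d\mu .
\]
First I would apply the Cauchy--Schwarz inequality for the positive semidefinite form $\langle X,Y\rangle_{\nabla}:=\int_{G}\nabla_{\mu}X\cdot\nabla_{\mu}Y\,d\mu$ to the numerator; this cancels one factor of $\sqrt{D(t)}$ and yields the clean pointwise-in-time estimate $\frac{d}{dt}\sqrt{D(t)}\leq\sqrt{\int_{G}\vert\nabla_{\mu}\dot{X}_t\vert^2 d\mu}$.

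The heart of the first claim is then a uniform bound $\int_{G}\vert\nabla_{\mu}\dot{X}_t\vert^2 d\mu\leq C_1^2$ that is independent of $t$. Here the normalization is decisive: $\mathrm{Norm}$ sends each row of $X_t$ to a vector of fixed (bounded) norm, so $\Vert N_t\Vert$ is bounded by a constant determined only by $d$ and the affine scale/shift, uniformly along the trajectory. Since $G$ is finite, $\nabla_{\mu}$ and $P_{\mu}$ are fixed bounded linear operators on $C(G)$ (and even if $P_{\mu}$ varies with $X_t$ through attention, its softmax rows keep the operator norm uniformly bounded), so by equivalence of norms on the finite-dimensional space $C(G)$ we get $\int_{G}\vert\nabla_{\mu}(P_{\mu}N_t)\vert^2 d\mu\leq C\Vert N_t\Vert^2\leq C_1^2$ for a graph-dependent constant. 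Integrating $\frac{d}{dt}\sqrt{D(t)}\leq C_1$ from $0$ to $t$ gives $\sqrt{D(t)}\leq C_1 t+\sqrt{D(0)}$, and squaring produces the stated bound $\int_{G}\vert\nabla_{\mu}X_t\vert^2 d\mu\leq\big(C_1 t+\sqrt{\int_{G}\vert\nabla_{\mu}X_0\vert^2 d\mu}\big)^2$. By Lemma \ref{compare} the same conclusion transfers, up to universal constants, to the Laplacian energy.

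For the second claim I would reuse the chain-rule identity $\dot{D}(t)=2\sqrt{D(t)}\,\frac{d}{dt}\sqrt{D(t)}$ together with the velocity bound $\frac{d}{dt}\sqrt{D(t)}\leq C_1$ just established, giving $\dot{D}(t)\leq 2C_1\sqrt{D(t)}$. Dividing by $D(t)$ yields the relative rate $\frac{\dot{D}(t)}{D(t)}\leq\frac{2C_1}{\sqrt{D(t)}}$, and substituting the assumed power-law lower bound $D(t)\geq C_2 t^{q}$ (so that $\sqrt{D(t)}\geq\sqrt{C_2}\,t^{q/2}$) converts this directly into $\frac{\dot{D}(t)}{D(t)}\leq\frac{2C_1}{\sqrt{C_2}}\,t^{-q/2}=:C\,t^{-q/2}$, which is exactly the claimed algebraic decay of the relative energy. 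As a consistency check, the upper bound forces $D(t)\lesssim t^2$, so $q\in(0,2]$, and the borderline $q=2$ recovers the $1/t$ relative decay matching the empirically observed power law.

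I expect the main obstacle to be making the uniform bound on $\int_{G}\vert\nabla_{\mu}\dot{X}_t\vert^2 d\mu$ fully rigorous rather than heuristic: one must pin down precisely what $\mathrm{Norm}$ is (row-wise layer normalization with fixed affine parameters), verify that its image is a subset of $C(G)$ bounded uniformly in $t$, and then thread that bound through the fixed operators $\nabla_{\mu}$ and $P_{\mu}$. Everything after this reduces to a scalar ODE comparison. A minor secondary point is the possible degeneracy $D(0)=0$ (constant initialization), which makes $\frac{d}{dt}\sqrt{D(t)}$ ill-defined at $t=0$; this is handled by restricting to $\{t:D(t)>0\}$ or an $\varepsilon$-regularization, since generically $D(0)>0$.
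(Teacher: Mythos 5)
Your argument is correct and is essentially the paper's own proof: both reduce the first claim to the differential inequality $\frac{d}{dt}\int_G\vert\nabla_\mu X_t\vert^2 d\mu\leq C\sqrt{\int_G\vert\nabla_\mu X_t\vert^2 d\mu}$ via Cauchy--Schwarz together with the key fact that $\Vert \mathrm{Norm}(X_t)\Vert^2$ is a constant along the trajectory, and the second claim follows identically by dividing through and inserting the assumed lower bound. The only cosmetic difference is that the paper routes the Cauchy--Schwarz step through a spectral decomposition of $-\Delta_\mu$ (bounding $\sum_i 2(1-\lambda_i)\lambda_i B_iC_i$), whereas you bound the gradient bilinear form directly using the uniform operator norm of $\nabla_\mu P_\mu$ on the finite graph; these are the same estimate organized differently.
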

\section{Conclusion}
We develop a simple method called nonlocal message passing to theoretically and empirically overcome over-smoothing efficiently. This method can be also easily applied to any problem with respect to exponentially converge to steady state. Also, the distinct phenomenon shed light to addressing the curse of depth in LLMs and further study on Post-LN configuration in LLMs. Most importantly, more works need to be done in the future to train a large and universal graph neural networks.
\medskip

\bibliographystyle{apalike}

\small


\appendix

\section{Proofs of Theorems}
\subsection{Preliminaries}
We first give an integration by parts lemma.
\begin{lemma}
    Let $G=(V,E,\omega,\mu)$ be a weighted graph, then we have integration by parts
    \[
    \int_{G}-\Delta_{\mu}X\cdot Yd\mu =\int_{G}\nabla_{\mu}X\cdot\nabla_{\mu}Yd\mu
    \]
\end{lemma}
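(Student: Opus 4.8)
The plan is to prove the identity by brute-force expansion of both sides into sums over the vertex set, observing that the vertex measure $\mu_i$ cancels against the $1/\mu_i$ factors implicit in $\Delta_{\mu}$ and in $\nabla_{\mu}X\cdot\nabla_{\mu}Y$, and then reorganizing the resulting double sums over directed edges using the symmetry $\omega_{ij}=\omega_{ji}$. This is the discrete analogue of ordinary integration by parts, where the edge-swapping (summation-by-parts) step plays the role of moving the derivative across the inner product.

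First I would unfold the left-hand side. Using the definitions of the integral and of the weighted Laplacian,
\[
\int_{G}-\Delta_{\mu}X\cdot Y\,d\mu=\sum_{i\in[n]}\Bigl(-\sum_{j\sim i}\frac{\omega_{ij}}{\mu_i}(X(j)-X(i))\Bigr)\cdot Y(i)\,\mu_i=-\sum_{i\in[n]}\sum_{j\sim i}\omega_{ij}(X(j)-X(i))\cdot Y(i),
\]
where the weight $\mu_i$ coming from $d\mu$ exactly cancels the $1/\mu_i$ inside $\Delta_{\mu}$. The key structural observation is that what remains is a sum over the set of directed edges $\{(i,j):j\sim i\}$ carrying the symmetric weights $\omega_{ij}$. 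In parallel I would unfold the right-hand side the same way, obtaining
\[
\int_{G}\nabla_{\mu}X\cdot\nabla_{\mu}Y\,d\mu=\sum_{i\in[n]}\sum_{j\sim i}\omega_{ij}(X(j)-X(i))\cdot(Y(j)-Y(i)),
\]
again with the measure cancelled, so that both sides are now expressed as weighted edge sums of the same primitive quantities $(X(j)-X(i))$ and the $Y$-differences.

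The crux is then the symmetrization of the left-hand side. Since $j\sim i$ if and only if $i\sim j$ and $\omega_{ij}=\omega_{ji}$, the directed-edge sum is invariant under the relabeling $i\leftrightarrow j$; applying this swap converts the summand $-\omega_{ij}(X(j)-X(i))\cdot Y(i)$ into $\omega_{ij}(X(j)-X(i))\cdot Y(j)$, and combining the original form with its swapped counterpart assembles precisely the symmetric bilinear combination $\omega_{ij}(X(j)-X(i))\cdot(Y(j)-Y(i))$ appearing on the right. The step I expect to be the main obstacle is the bookkeeping of this edge summation: one must track carefully that $\sum_i\sum_{j\sim i}$ traverses each undirected edge in both orientations and ensure that the symmetrization is carried out consistently with the edge-counting convention built into the definition of $\nabla_{\mu}X\cdot\nabla_{\mu}Y$. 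Once the measures are cancelled and the symmetry of $\omega$ is invoked correctly, matching the two edge sums term by term is routine algebra and the identity follows.
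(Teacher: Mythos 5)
Your strategy is exactly the paper's: expand both integrals so that $\mu_i$ cancels, then symmetrize the resulting directed-edge sum via the relabeling $i\leftrightarrow j$ using $\omega_{ij}=\omega_{ji}$. There is, however, one concrete slip in the step you yourself flagged as the main hazard. Write $A=\int_G-\Delta_\mu X\cdot Y\,d\mu=-\sum_{i}\sum_{j\sim i}\omega_{ij}(X(j)-X(i))\cdot Y(i)$; the swap shows that $A$ is \emph{also} equal to $\sum_{i}\sum_{j\sim i}\omega_{ij}(X(j)-X(i))\cdot Y(j)$. Adding the two representations therefore gives
\[
2A=\sum_{i}\sum_{j\sim i}\omega_{ij}(X(j)-X(i))\cdot\bigl(Y(j)-Y(i)\bigr),
\]
and the right-hand side here is precisely $\int_G\nabla_\mu X\cdot\nabla_\mu Y\,d\mu$ as defined in Section 2 (which carries no factor $\tfrac12$). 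So ``combining the original form with its swapped counterpart'' assembles \emph{twice} the left-hand side, not the left-hand side itself: under the stated definitions the correct identity is $\int_G-\Delta_\mu X\cdot Y\,d\mu=\tfrac12\int_G\nabla_\mu X\cdot\nabla_\mu Y\,d\mu$. To be fair, the paper's own proof has the same wrinkle --- it correctly produces the $\omega_{ij}/2$ in its final display and then silently identifies the halved sum with $\int_G\nabla_\mu X\cdot\nabla_\mu Y\,d\mu$, which amounts to using the conventional carr\'e-du-champ definition with a built-in $\tfrac12$. Since the lemma is only ever invoked up to universal constants, nothing downstream breaks, but your write-up should either insert the $\tfrac12$ explicitly or note that the gradient inner product is to be read with the standard $\tfrac12$ normalization; as written, the ``combining'' sentence proves a statement off by a factor of two from the one claimed.
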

\begin{proof}
    Given a weighted graph $G=(V,E,\omega,\mu)$ and vectored-valued functions $X,Y: V\rightarrow\mathbb{R}^{d}$. First, we suppose $d=1$, we have the following equality
    \begin{align*}
  &\sum_{i\in[n]}\sum_{j\in\mathcal{N}_i}\omega_{ij}X(i)Y(i)\\
        &=\sum_{i\sim j}\omega_{ij}(X(i)Y(i)+X(j)Y(j))\\
        &=\sum_{i\in[n]}\sum_{j\in\mathcal{N}_i}\omega_{ij}X(j)Y(j)
    \end{align*}
    and 
    \begin{align*}
    &\sum_{i\in[n]}\sum_{j\in\mathcal{N}_i}\omega_{ij}X(j)Y(i)\\
        &=\sum_{i\sim j}\omega_{ij}(X(j)Y(i)+X(i)Y(j))\\
        &=\sum_{i\in[n]}\sum_{j\in\mathcal{N}_i}\omega_{ij}X(i)Y(j)
    \end{align*}
    Combining above equalities we deduce
    \begin{align*}
      \int_{V}\Delta_{\mu} X\cdot Yd\mu 
      &= \sum_{i=1}^n\sum_{j\in\mathcal{N}_i}\omega_{ij}(X(j)-X(i))Y(i)\\
         & = \sum_{i=1}^n\sum_{j\in\mathcal{N}_i}(\omega_{ij}X(j)Y(i)-\omega_{ij}X(i)Y(i))\\
         & = \sum_{i=1}^n\sum_{j\in\mathcal{N}_i}(\omega_{ij}X(i)Y(j)-\omega_{ij}X(j)Y(j))\\
         &=\sum_{i=1}^n\sum_{j\in\mathcal{N}_i}\omega_{ij}(X(i)-X(j))Y(j)\\       
    \end{align*}

    Thus adding the right-hand side of the first line and last line we get
    \begin{equation*}
        \begin{aligned}
        \int_{G}\Delta_{\mu} X\cdot Yd\mu 
        &= \sum_{i=1}^n\sum_{j\in\mathcal{N}_i}\frac{\omega_{ij}}{2}(X(j)-X(i))(Y(i)-Y(j))\\&=-\int_{V}\nabla_{\mu} X\cdot\nabla_{\mu} Y
    \end{aligned}
    \end{equation*}

    Now suppose general $d$ and $X=(X_1,\dots,X_d)^T, Y=(Y_1,\dots,Y_d)^T$, then a direct calculation yield 
    \begin{align*}
        &\int_G\Delta_{\mu} X\cdot Yd\mu\\
        &=\int_G\Delta_{\mu} X_1\cdot Y_1d\mu+\dots+\Delta_{\mu} X_d\cdot Y_dd\mu\\
        &=-\int_G\nabla_{\mu} X_1\cdot\nabla_{\mu} Y_1d\mu-\dots-\nabla_{\mu} X_d\cdot \nabla_{\mu}Y_dd\mu\\
        &=-\int_G\nabla_{\mu} X\cdot\nabla_{\mu} Yd\mu
    \end{align*}
\end{proof}
\subsection{Supplemental proof of subsection 4.1}
We only need to prove \[
\frac{\partial \int_{G}\vert\nabla_{\mu} X_t\vert^2d\mu}{\partial t}= -\int_{G}\vert\Delta_{\mu} X_t\vert^2d\mu .
\]
This is from Lemma A.1 and a direct computation.
\begin{align*}
    \frac{\partial \int_{G}\vert\nabla_{\mu} X_t\vert^2d\mu}{\partial t} &= -2\int_{G}\Delta_{\mu}X_t\cdot\frac{\partial X_t}{\partial t}d\mu\\
    &=-2\int_{G}\vert\Delta_{\mu}X_t\vert^2d\mu
\end{align*}

\subsection{Proof of Theorem 4.1}
We only need to prove that for some constants $C_1,C_2>0$ we have
\[
-C_1(\int_{G}\vert\nabla_{\mu}X_t\vert^2d\mu)^{2}\leq\frac{\int_{G}\vert\nabla_{\mu}X_t\vert^2d\mu}{\partial t}\sim -C_2(\int_{G}\vert\nabla_{\mu}X_t\vert^2d\mu)^{2}
\]
Similarly by Lemma A.1 we have
\begin{align*}
    \frac{\partial \int_{G}\vert\nabla_{\mu} X_t\vert^2d\mu}{\partial t} &= -2\int_{G}\Delta_{\mu}X_t\cdot\frac{\partial X_t}{\partial t}d\mu\\
    &=-2(\int_{G}\vert\Delta_{\mu}X_t\vert^2d\mu)^2.
\end{align*}
Since there exists constants $C_1.C_2>0$ such that
\[
C_1 \int_{G}\vert\nabla_{\mu}X_t\vert^2d\mu\leq\int_{G}\vert\Delta_{\mu}X_t\vert^2d\mu\leq C_2 \int_{G}\vert\nabla_{\mu}X_t\vert^2d\mu
\]
We have the desired results.
\subsection{Proof of Theorem 5.1}
Suppose the $l^2$ eigenvalues of operator $-\Delta_{\mu}:\mathbb{R}^{n\times d}\rightarrow\mathbb{R}^{n\times d}$ 
are 
\[
\lambda_1,...\lambda_{nd}
\]
and the corresponding eigenfunctions
\[
f_{1},...,f_{nd}
\]
for fixed $t$ suppose the spectral decomposition of $X_t$ is 
\[
X_t =\sum\limits_{i}C_if_i
\]
the spectral decomposition of $Norm(X_t)$ is
\[
Norm(X_t)=\sum\limits_{i}B_if_i
\]
then we have
\begin{align*}
\frac{\partial\int_{G}\vert \nabla_{\mu}X_t\vert^2d\mu}{\partial t}=&\int_{G}-2P_{\mu}\Delta_{\mu}X_t\cdot proj_{S^d}X_td\mu\\
&=\sum\limits_{i}2(1-\lambda_i)\lambda_iB_iC_i\\
&\leq C\sum\limits_{i}\sqrt{\lambda_i}\vert B_iC_i\vert\\
&\leq C\sqrt{(\sum\limits_{i}B_i^2)(\sum\limits_{i}\lambda_iC_i^2)}\\
\end{align*}
Since 
\[\sum\limits_{i}\lambda_iC_i^2=\int_{G}\vert \nabla_{\mu}X_t\vert^2d\mu\]
and 
\[
\sum\limits_{i}B_i^2=\int_{G}\vert Norm(X_t)\vert^2d\mu=n
\]
Hence
\[
\frac{\partial\int_{G}\vert \nabla_{\mu}X_t\vert^2d\mu}{\partial t}\leq C\sqrt{\int_{G}\vert \nabla_{\mu}X_t\vert^2d\mu}
\]
solving this we have 
\[
\int_{G}\vert \nabla_{\mu}X_t\vert^2d\mu\leq (C_1t +\sqrt{\int_{G}\vert \nabla_{\mu}X_0\vert^2d\mu})^2
\]
Now assuming a lower power-law bound
\[
C_2 t^{q}\leq \int_{G}\vert \nabla_{\mu} X_t\vert^2d\mu, 
\]
For the relative Laplacian energy, we directly compute that
\begin{align*}
    \frac{1}{\int_{G}\vert \nabla_{\mu} X_t\vert^2d\mu }\frac{\partial \int_{G}\vert \nabla_{\mu} X_t\vert^2d\mu }{\partial t}\leq \frac{C}{\sqrt{\int_{G}\vert \nabla_{\mu} X_t\vert^2d\mu} }\leq \frac{C}{t^{\frac{q}{2}}}.
\end{align*}
\section{Experiment details}

\subsection{Experiment Setup}\label{Setup}
For all models, we train 5000 epoches and the L2 regularization is 5e-4. Hidden dimension is fixed as 32. For Pre-LN, the learning rate is fixed as 1e-3. For Post-LN, we use the warm up strategy for learning rate with maximum 1e-4. 
\subsection{Data details}
\begin{table}[H]
  \caption{Dataset Statistics}
  \label{sample-table}
  \centering
  \begin{tabular}{llllll}
    \toprule
    Datasets &Cora &Citeseer &Pubmed&CS&Physics\\
    \midrule
    Nodes &2708&3327&19717&18333&34493\\
    Edges &5278&4522&44324&81894&247962\\
    Features &1433&3703&500&6805&8415\\
    Classes  &7&6&3&15&5\\
    \bottomrule
  \end{tabular}
\end{table}
\subsection{Supplemental results in subsection 3.2}\label{Prenorm}
The training, validation and test accuracy of SAN with Pre-LN are as follow.
\begin{table}[H]
  \caption{Train accuracy of SAN with Pre-LN over different datasets }
  \label{sample-table}
  \centering
  \begin{tabular}{lllllll}
    \toprule
    Layers  &2 Layers &32 Layers& 64 Layers&128 Layers & 256 Layers \\
    Datasets &Acc  & Acc &Acc & Acc &Acc \\
    \midrule
    \multirow{1}{*}{Cora} &100&99.08&93.97&97.23&100\\
    \multirow{1}{*}{Citeseer} &100&89.98&100&99.90&89.23  \\
    \multirow{1}{*}{Pubmed} &94.51&97.02&99.64&97.44&97.43    \\
    \multirow{1}{*}{Physics} &100&100&98.51&99.64&99.45 \\
    \multirow{1}{*}{CS} &100&97.44&99.99&98.17&97.78  \\
    \bottomrule
  \end{tabular}
\end{table}
\begin{table}[H]
  \caption{Validation accuracy of Transformer with Pre-LN}
  \label{sample-table}
  \centering
  \begin{tabular}{lllllll}
    \toprule
    Layers  &2 Layers &32 Layers& 64 Layers&128 Layers & 256 Layers \\
    Datasets &Acc  & Acc &Acc & Acc &Acc \\
    \midrule
    \multirow{1}{*}{Cora} &86.32&86.51&86.69&87.43&87.80 \\
    \multirow{1}{*}{Citeseer} &74.44&75.64&75.04&73.53&73.98 \\
    \multirow{1}{*}{Pubmed}  &90.03&89.35&89.45&89.78&89.78    \\
    \multirow{1}{*}{Physics} &97.14&96.58&96.71&96.36&96.78\\
    \multirow{1}{*}{CS} &94.38&92.55&92.06&92.03&92.39  \\
    \bottomrule
  \end{tabular}
\end{table}
\begin{table}[H]
  \caption{Test accuracy of SAN with Pre-LN}
  \label{sample-table}
  \centering
  \begin{tabular}{lllllll}
    \toprule
    Layers  &2 Layers &32 Layers& 64 Layers&128 Layers & 256 Layers \\
    Datasets &Acc  & Acc &Acc & Acc &Acc \\
    \midrule
    \multirow{1}{*}{Cora} &88.40&87.11&88.03&86.92&87.29  \\
    \multirow{1}{*}{Citeseer} &72.07&72.37&70.87&71.17&72.67  \\
    \multirow{1}{*}{Pubmed}  &89.00&89.40&88.39&89.17&89.43   \\
    \multirow{1}{*}{Physics} &96.64&96.09&96.16&95.70&96.26\\
    \multirow{1}{*}{CS}  &94.77&93.29&92.64&92.53&93.21 \\
    \bottomrule
  \end{tabular}
\end{table}
The training, validation and test accuracy of SAN with Post-LN are as follow.
\begin{table}[H]
  \caption{Train accuracy of SAN with Post-LN over different datasets}
  \label{sample-table}
  \centering
  \begin{tabular}{lllllll}
    \toprule
    Layers  &2 Layers &32 Layers& 64 Layers&128 Layers & 256 Layers \\
    Datasets &Acc & Acc&Acc & Acc &Acc \\
    \midrule
     \multirow{1}{*}{Cora}  &97.11&91.32&96.06&84.11&31.65\\
    \multirow{1}{*}{Citeseer} &94.49&88.03&90.48&86.02&55.96     \\
    \multirow{1}{*}{Pubmed}   &97.13&85.75&85.16&70.14&40.18    \\
    \multirow{1}{*}{Physics}  &100&97.36&95.23&76.12&50.84 \\
    \multirow{1}{*}{CS}  &100&94.53&92.36&75.66&23.14  \\
    \bottomrule
  \end{tabular}
\end{table}

\begin{table}[H]
  \caption{Validation accuracy of Transformer with Post-LN over different datasets}
  \label{sample-table}
  \centering
  \begin{tabular}{lllllll}
    \toprule
    Layers  &2 Layers &32 Layers& 64 Layers&128 Layers & 256 Layers \\
    Datasets &Acc  & Acc &Acc & Acc &Acc \\
    \midrule
     \multirow{1}{*}{Cora}  &86.14&85.77&85.77&77.82&29.57\\
    \multirow{1}{*}{Citeseer}  &74.74&72.93&75.49&73.98&54.59    \\
    \multirow{1}{*}{Pubmed}   &89.88&83.87&84.02&70.94&39.23    \\
    \multirow{1}{*}{Physics}  &96.74&95.42&94.61&76.37&50.13 \\
    \multirow{1}{*}{CS}  &93.51&89.77&89.33&76.32&24.99  \\
    \bottomrule
  \end{tabular}
\end{table}

\begin{table}[H]
  \caption{Accuracy of SAN with Post-LN }
  \label{sample-table}
  \centering
  \begin{tabular}{lllllll}
    \toprule
    Layers  &2 Layers &32 Layers& 64 Layers&128 Layers & 256 Layers \\
    Datasets &Acc  & Acc &Acc & Acc &Acc \\
    \midrule
     \multirow{1}{*}{Cora} &87.66&86.19&86.74&81.58&29.83 \\
    \multirow{1}{*}{Citeseer} &73.12&71.62&71.77&70.42&51.65     \\
    \multirow{1}{*}{Pubmed}   &89.55&83.62&83.01&71.15&39.93    \\
    \multirow{1}{*}{Physics} &96.39&94.58&94.19&75.77&49.96  \\
    \multirow{1}{*}{CS}  &94.06&90.79&89.59&76.23&23.39  \\
    \bottomrule
  \end{tabular}
\end{table}
\subsection{Supplemental results in subsection 4.2}\label{nonlocal}
The training, validation and test accuracy of nonlocal SAN with Post-LN are as follow.
\begin{table}[H]
  \caption{Train accuracy of nonlocal SAN with Post-LN}
  \label{sample-table}
  \centering
  \begin{tabular}{lllllll}
    \toprule
    Layers  &2 Layers &32 Layers& 64 Layers&128 Layers & 256 Layers \\
    Datasets &Acc  & Acc &Acc & Acc &Acc \\
    \midrule
    \multirow{1}{*}{Cora} &100&92.98&91.69&90.89&89.35 \\
    \multirow{1}{*}{Citeseer} &99.25&87.93&87.68&89.38&89.43  \\
    \multirow{1}{*}{Pubmed} &95.84&96.98&93.92&95.00&95.13     \\
    \multirow{1}{*}{Physics} &100&97.70&95.95&94.07&98.10 \\
    \multirow{1}{*}{CS} &100&95.60&94.91&95.53&94.67  \\
    \bottomrule
  \end{tabular}
\end{table}

\begin{table}[H]
  \caption{validation accuracy of nonlocal SAN with Post-LN}
  \label{sample-table}
  \centering
  \begin{tabular}{lllllll}
    \toprule
    Layers  &2 Layers &32 Layers& 64 Layers&128 Layers & 256 Layers \\
    Datasets &Acc  & Acc &Acc & Acc &Acc \\
    \midrule
    \multirow{1}{*}{Cora} &85.77&87.62&86.14&87.43&87.25 \\
    \multirow{1}{*}{Citeseer} &73.23&72.78&73.68&72.78&73.53  \\
    \multirow{1}{*}{Pubmed}  &90.29&89.04&87.67&87.98&87.60    \\
    \multirow{1}{*}{Physics} &96.88&95.49&95.22&93.53&95.42  \\
    \multirow{1}{*}{CS} &93.92&90.48&90.78&90.75&89.91  \\
    \bottomrule
  \end{tabular}
\end{table}
\begin{table}[H]
  \caption{Test accuracy of nonlocal SAN with Post-LN}
  \label{sample-table}
  \centering
  \begin{tabular}{lllllll}
    \toprule
    Layers  &2 Layers &32 Layers& 64 Layers&128 Layers & 256 Layers \\
    Datasets &Acc  & Acc &Acc & Acc &Acc \\
    \midrule
    \multirow{1}{*}{Cora} &88.77&87.29&87.11&88.03&87.85 \\
    \multirow{1}{*}{Citeseer} &69.22&71.77&72.52&71.32&71.62   \\
    \multirow{1}{*}{Pubmed} &89.58&88.79&87.45&88.24&87.53  \\
    \multirow{1}{*}{Physics} &96.45&94.84&94.39&93.32&94.99  \\
    \multirow{1}{*}{CS} &94.27&91.09&91.22&91.47&90.68  \\
    \bottomrule
  \end{tabular}
\end{table}
\section{The curse of depth}\label{curse}
For Pre-LN SAN, we give the evolution of Laplacian energy on Citeseer, Pubmed, CS, Physics at initialization as follow
\begin{figure}[H]
  \centering
  \includegraphics[width=0.8\textwidth]{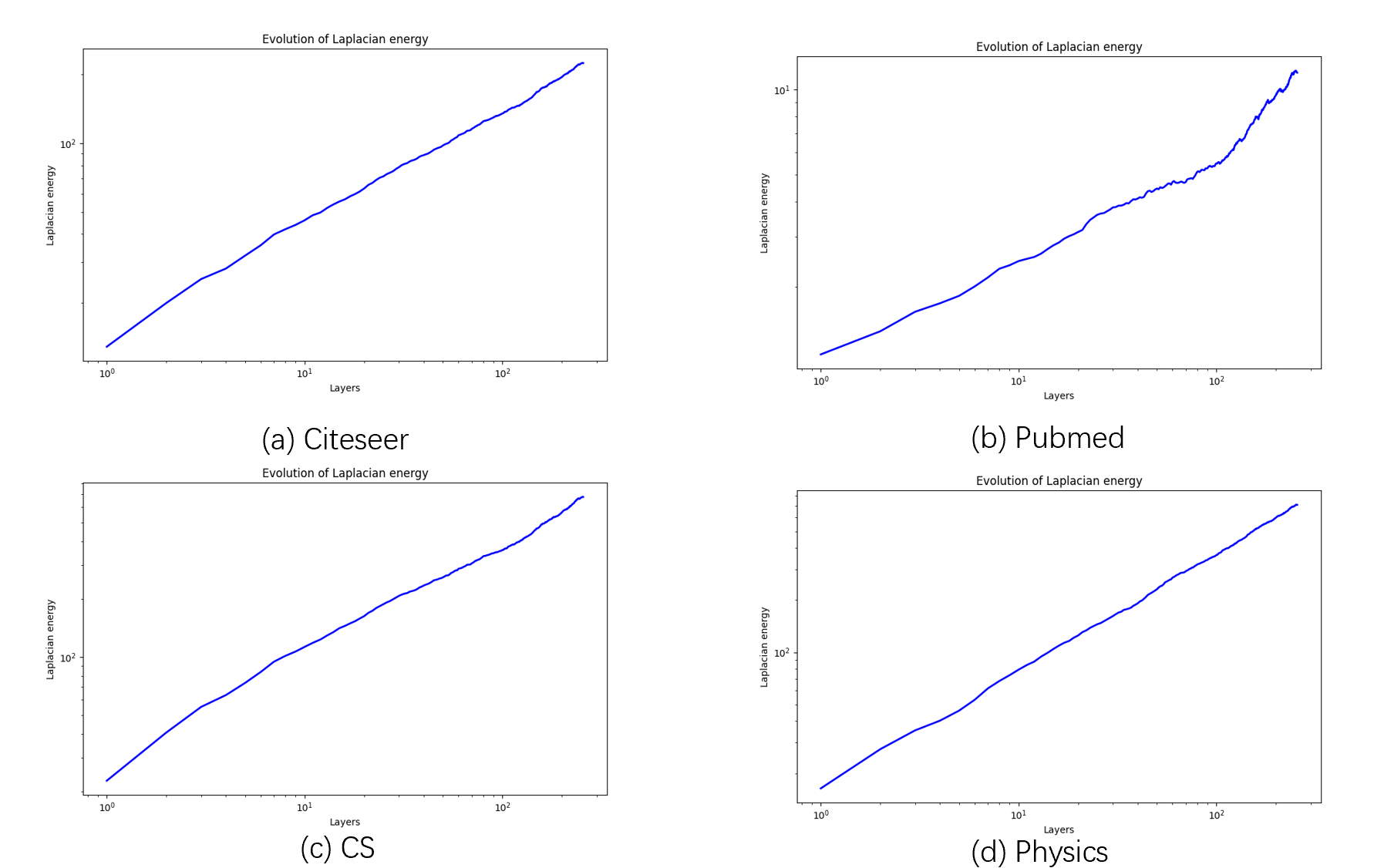}
  \caption{Evolution of Laplacian energy at initialization.}
\end{figure}
\end{document}